\documentclass[runningheads]{llncs}

\usepackage[T1]{fontenc}
\usepackage{graphicx}
\usepackage{amsmath,amssymb}

\begin{document}

\title{Hypercubes, Hyperplanes, and Constraint-Induced Complexity Collapse in Atomic Concept Learning}
\titlerunning{Hypercubes, Hyperplanes, and Complexity Collapse}

\providecommand{\orcidID}[1]{\textsuperscript{\,\texttt{[#1]}}}

\author{Irene Tsapara\orcidID{0009-0006-1819-2639}}
\authorrunning{I. Tsapara}

\institute{National University, San Diego, United States\\
\email{itsapara@nu.edu}}

\maketitle

\begin{abstract}
We revisit higher-arity atomic concept learning through the geometry of hypercubes and hyperplanes of ground instances. Our starting point is the observation that the ambient $r$-dimensional hypercube of ground atoms is not structurally uniform. Its logical complexity is organised by hyperplanes: every hyperplane other than the full diagonal collapses into finitely many elementary-equivalence classes, with a bound independent of the term depth, while the full diagonal is exceptional and its class count grows without bound. This asymmetry is not merely geometric; it reflects the reduction-theoretic structure of the concepts themselves.

Building on a higher-dimensional framework developed in the author's earlier work, we reinterpret these results through canonical simple concepts, minimal orderings, and representative reductions. This yields a taxonomy of hyperplane behaviour in higher dimensions and shows that complexity is localised rather than spread uniformly through the instance space. The paper includes a fully worked binary case, an explicit treatment of the ternary hypercube, and an unpacked account of the reduction machinery that drives the collapse. The three-dimensional case already exhibits the essential phenomenon: orthogonal families, partial diagonals, and the exceptional full diagonal. This geometric-logical perspective clarifies where complexity is concentrated in atomic concept learning and suggests a modern interpretation in terms of constrained hypothesis spaces and structured classification.

\keywords{higher-arity atomic concept learning \and hypercubes \and hyperplanes \and elementary equivalence \and minimal reductions \and logical constraints \and structured hypothesis spaces \and complexity localization \and neural classification}
\end{abstract}

\section{Introduction}

The learnability of logical concept classes depends not only on the size of the ambient instance space, but also on its internal structure \cite{Vassiliades-TsaparaThesis1997,TsaparaTuran1998}. Earlier work on exact learning and on atomic formulas with prescribed first-order properties showed that logical restrictions can sharply reduce learning complexity \cite{Angluin1988,TsaparaTuran1998}. In this paper we revisit that phenomenon geometrically, and we focus on the higher-arity setting developed in the author's doctoral thesis \cite{Vassiliades-TsaparaThesis1997}, henceforth cited simply as \emph{the thesis}.

Our starting point is the hypercube of ground instances generated by bounded-depth terms. For an $r$-ary predicate, this yields a discrete $r$-dimensional space whose points represent ground atoms \cite{Vassiliades-TsaparaThesis1997}. The key observation is that this space is structurally inhomogeneous. Complexity is not distributed uniformly across the hypercube. Most hyperplanes collapse into a bounded number of elementary-equivalence classes, while the full diagonal remains exceptional.

\paragraph{What ``finitely many'' means here.}
One point deserves emphasis at the outset, because without it the main theorem can be misread. For a fixed term depth $n$, the hypercube $\mathcal H_{r,n}$ is a finite set, so every family of concepts over it is trivially finite. The content of the classification is therefore a statement that is \emph{uniform in $n$}: on every hyperplane except the full diagonal, the number of elementary-equivalence classes is bounded by a constant that does not depend on $n$. On the full diagonal no such uniform bound exists, and the number of classes grows without bound as $n$ increases. Throughout the paper, ``finitely many'' should be read in this uniform sense. The geometric picture we develop is the reason such a bound exists off the diagonal and fails on it.

\paragraph{Roadmap.}
Section~\ref{sec:picture} builds the geometric picture informally, with no formal machinery, starting from the binary lattice of ground atoms. Section~\ref{sec:prelim} fixes definitions. Section~\ref{sec:binary} works the binary case out completely; a reader who follows only that section will already have the essential idea, since every concept there is a quadrant, a ray, or a point, and the diagonal is visibly the odd one out. Section~\ref{sec:reduction} explains the reduction machinery that converts this geometry into a statement about elementary equivalence. Section~\ref{sec:classification} states and proves the classification theorem, Section~\ref{sec:ternary} works the ternary case, and Section~\ref{sec:diagonal} isolates why the diagonal resists the argument. Sections~\ref{sec:monolith}--\ref{sec:learning} give the monolithic and learning-theoretic readings.

\paragraph{Why the asymmetry matters.}
The split between regular and exceptional regions matters for two reasons. First, it identifies where the logical complexity of atomic concept learning actually resides. Second, it suggests a learning-theoretic interpretation: structural constraints do not merely regularise globally, but collapse large regions of the hypothesis space while leaving a small set of highly interactive regions as the principal source of complexity.

The contribution of this paper is therefore a structural reading of higher-arity atomic concept learning. We recover the hypercube and hyperplane framework, make the diagonal exceptionalism explicit, and reinterpret the resulting collapse in a way that connects to contemporary discussions of inductive bias and constrained classification.

\section{The Geometric Picture, Informally}
\label{sec:picture}

Before any definitions, it is worth seeing what the objects look like. Everything in this paper takes place in a space built from one constant $a$ and one unary function symbol $f$. Iterating $f$ on $a$ produces the ground terms
\[
a,\quad f(a),\quad f(f(a)),\quad \dots,\quad f^{n}(a),
\]
so a ground term of depth at most $n$ is completely described by a single number: how many times $f$ has been applied. The term space is a line of $n+1$ points, indexed $0,1,\dots,n$.

Now take a binary predicate $P$. A ground atom $P(f^{i}(a),f^{j}(a))$ is determined by the pair $(i,j)$, so the set of all ground atoms is a square lattice. This is the picture in Figure~\ref{fig:2d_lattice}: the horizontal coordinate records the depth of the first argument, the vertical coordinate the depth of the second, and the highlighted line $i=j$ is the diagonal that will turn out to be exceptional.

\begin{figure}[t]
    \centering
    \includegraphics[width=0.62\textwidth]{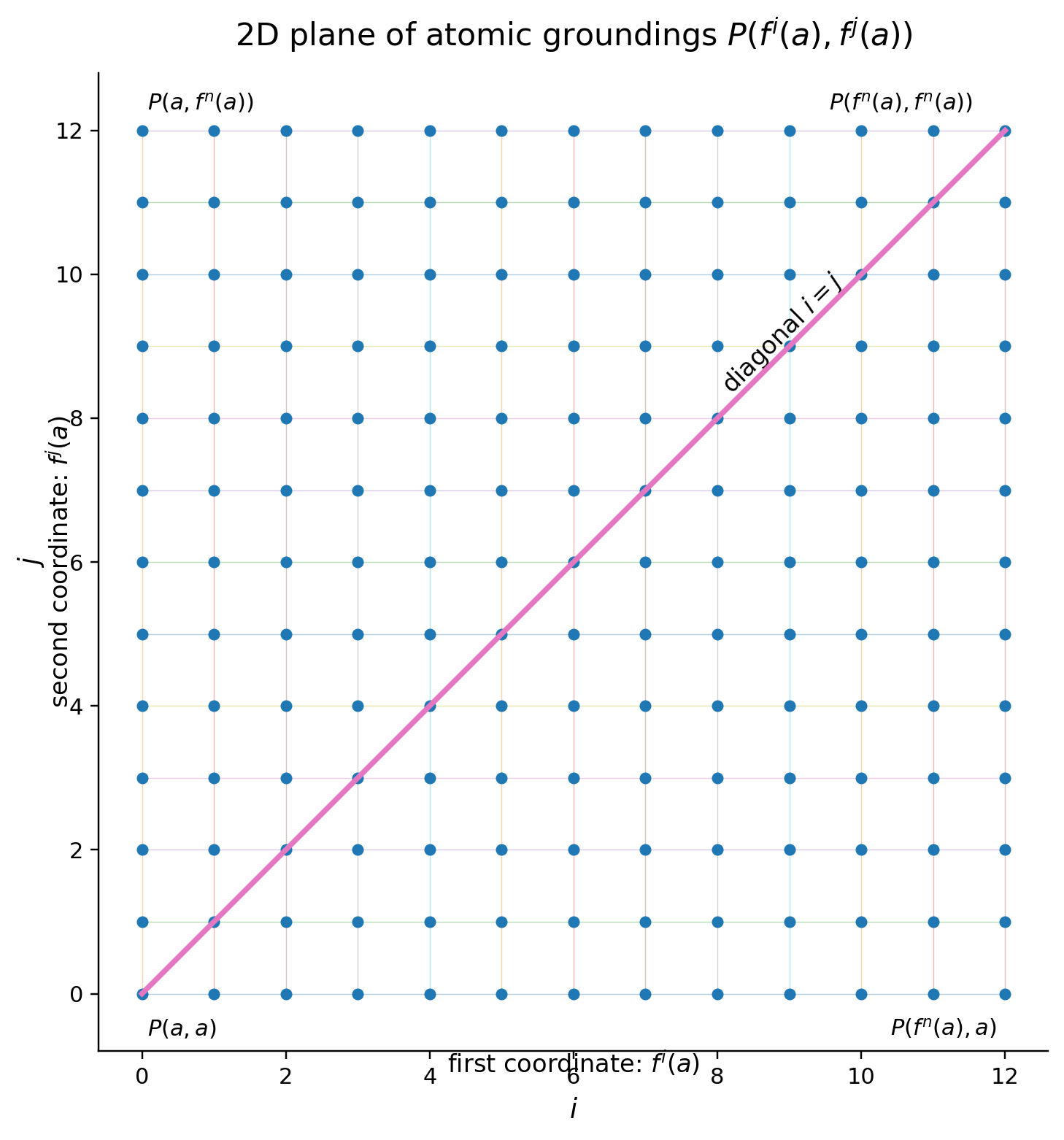}
    \caption{The binary case. Each lattice point is a ground atom $P(f^{i}(a),f^{j}(a))$; the coordinates are the depths of the two arguments. The highlighted line is the diagonal $i=j$.}
    \label{fig:2d_lattice}
\end{figure}

A lattice point is not a featureless location. Zooming in on one, as in Figure~\ref{fig:point_expansion}, shows that it is a tuple of structured terms: the point $(i,j)$ carries the pair $\bigl(f^{i}(a),\,f^{j}(a)\bigr)$, and each coordinate is itself an element of the term space. This is the observation that later becomes the \emph{monolithic} reading of the hypercube in Section~\ref{sec:monolith}. First-order constraints act across these structured coordinates, not merely on the lattice positions.

\begin{figure}[t]
    \centering
    \includegraphics[width=0.78\textwidth]{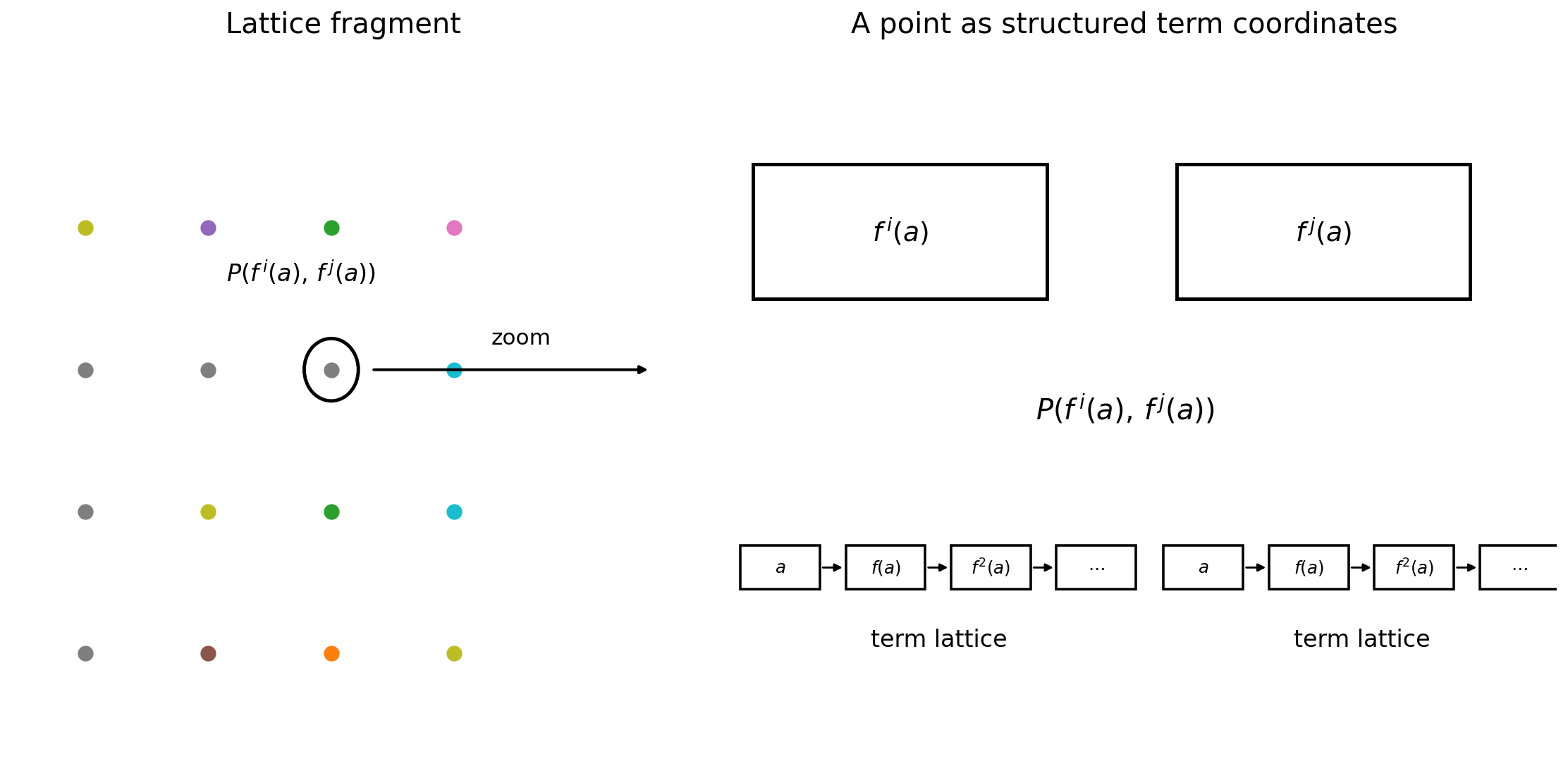}
    \caption{A point of the lattice, expanded. The hypervertex $(i,j)$ is the pair of structured terms $\bigl(f^{i}(a),f^{j}(a)\bigr)$, each drawn from the term space $T_n$; the arrows in the term lattices denote application of $f$.}
    \label{fig:point_expansion}
\end{figure}

Raising the arity raises the dimension. For a ternary predicate the ground atoms form a cube, shown in Figure~\ref{fig:3d_hypercube}, and for an $r$-ary predicate an $r$-dimensional grid. What changes in three dimensions is that coordinates can be identified \emph{partially}: one can require $i=j$ while leaving $k$ free, which is not possible when there are only two coordinates. The distinction between partial identification and total identification is invisible in the binary case and central in the general one.

\begin{figure}[t]
    \centering
    \includegraphics[width=0.55\textwidth]{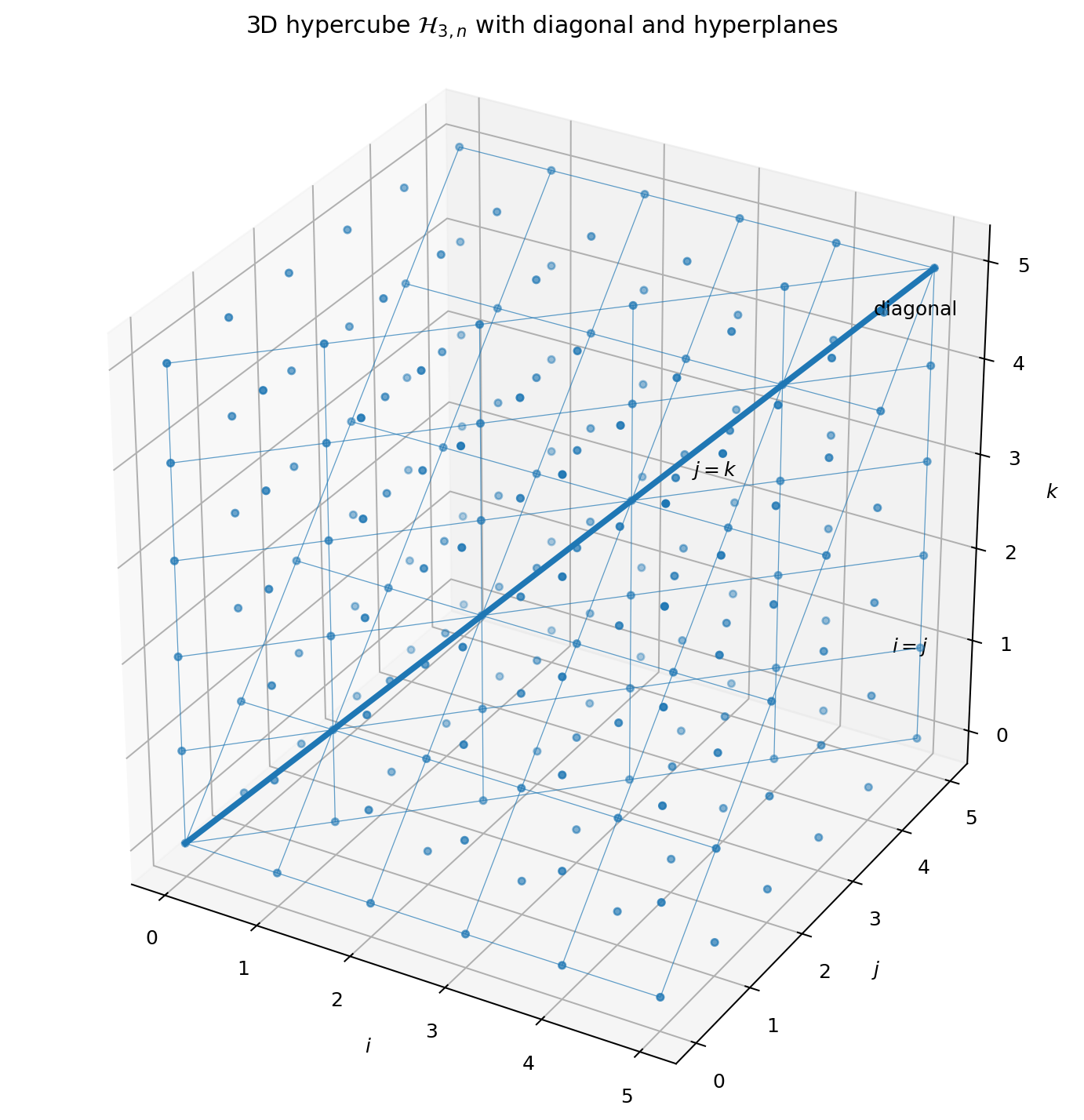}
    \caption{The ternary hypercube $\mathcal H_{3,n}$, with the partial diagonal $i=j$, the partial diagonal $j=k$, and the full diagonal $i=j=k$ indicated.}
    \label{fig:3d_hypercube}
\end{figure}

The claim this paper develops is that the grid is not uniform. Slicing it by coordinate conditions produces families of concepts whose logical complexity depends sharply on which slice was taken, and exactly one slice, the full diagonal, behaves differently from all the others.

\section{Preliminaries}
\label{sec:prelim}

\subsection{Terms, atoms, and the hypercube}

Let $f$ be a unary function symbol, $a$ a constant symbol, and let
\[
T_n=\{a,f(a),f^2(a),\dots,f^n(a)\}
\]
denote the set of ground terms of depth at most $n$. We write $f^{i}(a)$ for the element of depth $i$ and identify $T_n$ with $\{0,1,\dots,n\}$ whenever convenient.

For an $r$-ary predicate symbol $P$, define the ambient discrete space
\[
\mathcal H_{r,n}=T_n^{\,r},
\]
the \emph{hypercube of ground instances}. Each point $(t_1,\dots,t_r)\in\mathcal H_{r,n}$ represents the ground atom $P(t_1,\dots,t_r)$, and we use the coordinate notation $(i_1,\dots,i_r)$ for the corresponding depths.

A \emph{hypervertex} is a point of $\mathcal H_{r,n}$.

\begin{remark}[Terminology]
In the thesis, ``hypervertex'' (with ``hyperedge'') names a component of the hypergraph attached to a \emph{concept}, an object used in the reduction machinery of Section~\ref{sec:reduction}. Here the word denotes a point of the \emph{ambient} hypercube. The two usages describe different objects; where the hypergraph of a concept is meant, we say so explicitly.
\end{remark}

\subsection{Atomic formulas, subsumption, and concepts}

\begin{definition}[Atomic formulas and subsumption]
A \emph{term} over the present language is a variable, the constant $a$, or an application $f(s)$ of $f$ to a term $s$. An \emph{atomic formula} is an expression $P(s_1,\dots,s_r)$ whose arguments are terms; it is a \emph{ground atom} if no variable occurs in it. An atomic formula $A$ \emph{subsumes} an atomic formula $B$ if there is a substitution $\theta$, mapping variables to terms, with $A\theta=B$.
\end{definition}

For example, $P(f(x),y)$ subsumes the ground atom $P(f(a),f^{2}(a))$ via $\theta=\{x\mapsto a,\ y\mapsto f^{2}(a)\}$. It does \emph{not} subsume $P(a,a)$: substitution replaces variables by terms and can therefore only preserve or deepen the function structure of an argument, never remove the outer $f$. This monotonicity is worth keeping in mind when reading the figures: every specialisation arrow in this paper increases (or preserves) term depth, and no arrow can decrease it.

Given an atomic formula $A=P(s_1,\dots,s_r)$, let
\[
C_{A,n}=\{P(t_1,\dots,t_r)\in\mathcal H_{r,n}\;:\;A \text{ subsumes } P(t_1,\dots,t_r)\}
\]
be the \emph{concept represented by $A$} over $\mathcal H_{r,n}$; that is, the set of ground atoms obtainable from $A$ by substituting ground terms of $T_n$ for its variables. Let $\varphi$ be a first-order sentence over the language containing $P$. We study the class of concepts $C_{A,n}$ satisfying $\varphi$.

\subsection{Hyperplanes: orthogonal, diagonal, and the full diagonal}
\label{sec:hyperplanes}

A natural temptation is to use ``diagonal'' for total coordinate coincidence only, and ``non-diagonal'' for everything else. That conflicts with the thesis, where diagonal hyperplanes occur at every dimension. We therefore adopt the thesis vocabulary throughout and add one term for the exceptional case.

\begin{definition}[Hyperplanes]
A \emph{hyperplane} of $\mathcal H_{r,n}$ is a subset determined by a coordinate condition. It is
\begin{itemize}
\item \emph{orthogonal} if some coordinate is pinned to a fixed ground term, as in $i_1=p$;
\item \emph{diagonal} if some group of coordinates is required to be equal, as in $i_1=i_2$, with the remaining coordinates free.
\end{itemize}
A diagonal hyperplane is determined by a partition of the coordinate positions in which at least one block has size at least two. The \emph{full diagonal}
\[
\Delta_r=\{(i,i,\dots,i)\;:\;i\in T_n\}
\]
is the diagonal hyperplane of the one-block partition, in which all $r$ coordinates coincide.
\end{definition}

Diagonal hyperplanes thus exist at every dimension: for a ternary predicate there are $\binom{3}{2}=3$ diagonal hyperplanes of dimension two, namely $i=j$, $i=k$ and $j=k$, alongside the orthogonal ones. Only $\Delta_r$ identifies all coordinates at once. The classification below distinguishes $\Delta_r$ from every other hyperplane, orthogonal or diagonal.


\subsection{Elementary equivalence and minimal reductions}

\begin{definition}
Two concepts $C_{A,n}$ and $C_{B,n}$ are \emph{elementarily equivalent} if their associated relational structures satisfy the same first-order sentences in the relevant language fragment \cite{EbbinghausFlum1995}.
\end{definition}

\begin{definition}
A \emph{minimal reduction} of a concept structure is a reduced representative preserving its elementary theory. Two concepts with isomorphic minimal reductions are therefore elementarily equivalent.
\end{definition}

\begin{example}[The mechanism in miniature]\label{ex:ee}
A standard fact of finite model theory illustrates both notions in the simplest structure this paper uses, the term chain $a\to f(a)\to\cdots\to f^{m}(a)$ viewed as a finite successor structure. Sentences of quantifier rank $q$ cannot distinguish two such chains once both have length at least $2^{q}$ \cite{EbbinghausFlum1995}: an Ehrenfeucht--Fra\"iss\'e argument shows that any two sufficiently long chains satisfy exactly the same sentences of rank $q$, and are therefore elementarily equivalent relative to that fragment. A minimal reduction, in this miniature setting, replaces any chain of length $\ge 2^{q}$ by one fixed representative of length $2^{q}$: the elementary theory (relative to rank $q$) is preserved, and infinitely many chains collapse to one of boundedly many representatives, one per length below the threshold plus one for ``long''. The reductions used in the thesis operate on richer structures than bare chains, but the shape of the argument on hyperplanes other than the full diagonal is the same: beyond a bounded threshold determined by $\varphi$, additional depth along a free coordinate direction is invisible, so it is discarded by the reduction.
\end{example}

\section{The Binary Case, Worked in Full}
\label{sec:binary}

Everything essential is already visible when $r=2$, and in that case the concepts can be listed exhaustively. Let $A=P(s_1,s_2)$ be an atomic formula. Each $s_m$ is either a ground term $f^{p}(a)$ or a term $f^{p}(x)$ built on a variable, and there are at most two variables available. Substituting $x\mapsto f^{m}(a)$ turns $f^{p}(x)$ into $f^{p+m}(a)$, so a variable coordinate with $p$ applications of $f$ sweeps out the depths $p,p+1,p+2,\dots$ and never the depths below $p$. This single fact determines the shape of every binary concept.

\paragraph{The four shapes.}
Writing concepts as sets of coordinate pairs $(i,j)$ in $\{0,\dots,n\}^2$:
\begin{enumerate}
\item \textbf{Quadrants.} If $A=P(f^{p}(x),f^{q}(y))$ with $x$ and $y$ distinct, then
\[
C_{A,n}=\{(i,j)\;:\;i\ge p,\ j\ge q\},
\]
an axis-parallel region anchored at $(p,q)$. The two coordinates move independently.
\item \textbf{Diagonal rays.} If $A=P(f^{p}(x),f^{q}(x))$ with the \emph{same} variable in both positions, then
\[
C_{A,n}=\{(p+m,\,q+m)\;:\;m\ge 0\},
\]
a ray parallel to the main diagonal, offset by the fixed displacement $q-p$. The two coordinates move in lockstep.
\item \textbf{Axis rays.} If exactly one argument is ground, say $A=P(f^{p}(a),f^{q}(y))$, then $C_{A,n}=\{(p,j):j\ge q\}$, a ray inside a single column.
\item \textbf{Points.} If both arguments are ground, $C_{A,n}$ is the single hypervertex $(p,q)$.
\end{enumerate}

\paragraph{What the four shapes tell us.}
The taxonomy already separates the diagonal from everything else, and it does so for a reason that survives into higher arity. Shapes 1, 3 and 4 are described by \emph{independent} coordinate data: a lower bound on $i$, a lower bound on $j$, or a fixed value for one of them. Shape 2 is not. A diagonal ray is described by a \emph{relation between} the coordinates, the displacement $q-p$, and that relation is preserved by every application of $f$: advancing along the ray applies $f$ to both coordinates simultaneously and returns the displacement unchanged.

That is precisely the difference the classification theorem turns on. Off the diagonal, some coordinate direction remains free, and a minimal reduction can discard the depth information in that direction once it exceeds the finitely many thresholds that the constraint $\varphi$ can detect; the reduced structures then fall into a bounded family. On the diagonal, there is no free direction to discard, and the displacement is an invariant that distinguishes structures from one another without bound. As $n$ grows, the number of available displacements grows with it.

\begin{remark}[Where the uniformity enters]
This is where the clarification of Section~1 becomes concrete. For fixed $n$ there are only finitely many displacements, so the diagonal carries finitely many classes; the point is that this count is not bounded independently of $n$, whereas the counts attached to quadrants, axis rays and points are. The asymmetry is a statement about how the two families behave as $n\to\infty$.
\end{remark}

Figure~\ref{fig:hyperplane_classification} renders the contrast schematically, with the regular hyperplanes of Section~\ref{sec:hyperplanes} on the left and the full diagonal on the right.

\begin{figure}[t]
    \centering
    \includegraphics[width=0.72\textwidth]{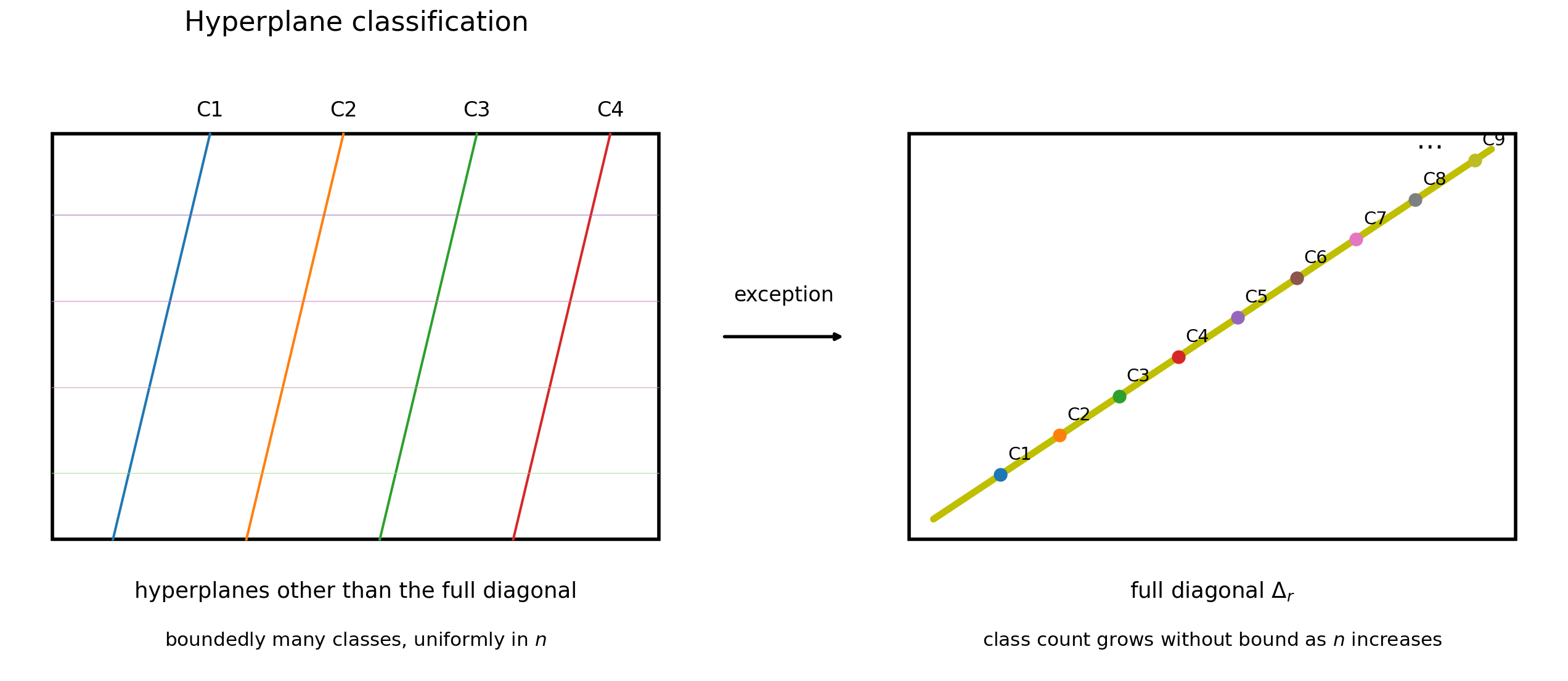}
    \caption{Hyperplane classification. Left: hyperplanes other than the full diagonal, whose induced concepts fall into a bounded number of structural classes. Right: the full diagonal, whose class count is not bounded uniformly in $n$.}
    \label{fig:hyperplane_classification}
\end{figure}

\section{Reduction Machinery}
\label{sec:reduction}

The classification is not a purely geometric fact; the geometry is what makes a reduction-theoretic argument go through. This section sets out that argument in outline, so that the theorem of Section~\ref{sec:classification} does not have to be taken on trust.

The machinery has three layers. The first is a notion of structure-preserving map between concept structures strong enough to preserve first-order theories. Strong homomorphisms preserve the relation and its complement; a \emph{reductive} homomorphism is a strong homomorphism onto a smaller structure that in addition respects the term structure of the coordinates. The basic fact is that reductive homomorphisms are elementary: a concept and its image satisfy the same sentences. Consequently, if two concepts admit a common reduction, they are elementarily equivalent.

The second layer makes the reduction canonical. To each concept one attaches a hypergraph recording which coordinate patterns are realised in it, and among all reductions of that hypergraph there is a minimal one, unique up to isomorphism. This minimal reductive hypergraph is the \emph{canonical concept}. The pivotal consequence is that the elementary-equivalence classes are exactly the classes of concepts sharing a canonical concept, which converts a question about first-order theories into a question about a finite combinatorial invariant.

The third layer is a counting argument on those invariants. Once elementary equivalence has been reduced to isomorphism of canonical concepts, bounding the number of classes on a region amounts to bounding the number of canonical concepts realisable there. This is where the geometry does its work: what a hyperplane fixes about the coordinates determines how much of the term structure survives reduction, and therefore how many canonical concepts are available.

\begin{remark}[Attribution]
The reduction apparatus reviewed here follows the thesis \cite{Vassiliades-TsaparaThesis1997}, which develops it after Elgueta \cite{Elgueta1997}; the reading of concepts as axis-parallel regions of a discrete space is likewise from the thesis. The presentation in this section is a summary, and the reader is referred there for the full statements.
\end{remark}

\section{Hyperplane Classification}
\label{sec:classification}

\begin{theorem}[Hyperplane Classification]
Let $\mathcal H_{r,n}=T_n^{\,r}$ be the hypercube of ground instances generated by one unary function symbol $f$, one constant symbol $a$, and an $r$-ary predicate symbol $P$, with terms of depth at most $n$. Let $\varphi$ be a first-order sentence over the language containing $P$.

Then for every hyperplane $H\neq\Delta_r$ of $\mathcal H_{r,n}$, the concepts induced on $H$ fall into at most $N(r)$ elementary-equivalence classes, where $N(r)$ depends only on $r$ and on $\varphi$ and not on $n$. On the full diagonal $\Delta_r$ no such uniform bound exists: the number of elementary-equivalence classes grows without bound as $n$ increases.
\end{theorem}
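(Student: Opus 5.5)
The plan follows the three layers of Section~\ref{sec:reduction}. First I reduce elementary equivalence to a finite invariant. Then I bound that invariant off the full diagonal. Finally I show it is unbounded on $\Delta_r$.

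\textbf{Setting up the invariant.} I fix $\varphi$ and let $q$ be its quantifier rank; all elementary equivalence is taken relative to rank-$q$ sentences, the fragment that $\varphi$ can detect. By the second layer of the reduction machinery, two concepts are equivalent exactly when their canonical concepts are isomorphic. It therefore suffices to count, uniformly in $n$, the canonical concepts realisable on a hyperplane $H$.

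As in Section~\ref{sec:binary}, a concept $C_{A,n}$ is determined by combinatorial data:
\begin{itemize}
\item a partition of the $r$ positions into variable-blocks and ground positions;
\item for each ground position, a depth $p$;
\item for each variable position, an offset $p$ (the term is $f^p(x)$).
\end{itemize}
Within each block, only the relative displacements between offsets matter, together with the minimum offset and the distance to $n$. The number of partitions is bounded by the Bell number $B_r$. Everything that can grow with $n$ is therefore a finite list of integers: depths, offsets, and displacements.

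\textbf{Off the diagonal.} Let $H\neq\Delta_r$. I treat the two types of hyperplane separately.

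If $H$ is orthogonal, some coordinate is pinned. If $H$ is diagonal, it is given by a partition with at least two blocks. In either case at least one coordinate direction is independent of another.

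I then apply the Ehrenfeucht--Fra\"iss\'e threshold argument of Example~\ref{ex:ee}, coordinatewise. Any depth, offset, or gap exceeding $2^q$ can be truncated to $2^q$ by a reductive homomorphism without changing the rank-$q$ theory. After truncation, each integer parameter lies in $\{0,\dots,2^q\}$, so the number of canonical concepts is at most $B_r\cdot(2^q+2)^{O(r)}$. Call this bound $N(r)$; it depends only on $r$ and $\varphi$, not on $n$.

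The step I expect to be the main obstacle is justifying truncation for displacements. Truncating a lone chain length is harmless, but a displacement between two coordinates of the same block is a relation, not a length. To handle this, I will show that on $H\neq\Delta_r$ the relevant structure decomposes as a product, or disjoint union, of components, each involving one free direction. A Feferman--Vaught-style composition then lets each component be truncated independently.

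Concretely, I would first attempt this by exhibiting the reductive homomorphism explicitly. It collapses all depths beyond the threshold along a free direction onto a single representative, and I would verify directly that it is a strong homomorphism. Bounded displacements within a block survive this collapse. Large displacements become indistinguishable once both endpoints exceed the threshold, because rank-$q$ formulas cannot measure long $f$-paths.

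\textbf{On the diagonal.} For the unbounded part, I would take $\varphi$ rich enough to include the successor relation induced by $f$ and equality. On $\Delta_r$, consider the concepts $C_{A_d,n}$, where $A_d=P(f^{d}(x),\dots,f^{d}(x))$, or the displaced variants with offsets $(0,d,\dots)$, for $d\le n$.

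I would show that the canonical concepts of these are pairwise non-isomorphic. The reason is that on $\Delta_r$ there is no free direction, so the reduction must preserve $d$ as an invariant, as the binary discussion of diagonal rays indicates.

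Hence at least $n+1$ distinct canonical concepts occur, so the class count tends to $\infty$ with $n$. The delicate point is to make sure the theorem's quantification allows the relevant sentences to grow with $n$. If it does not, I would interpret "elementary equivalence" in the full first-order sense, following the paper's definition, which is not fixed to rank $q$.
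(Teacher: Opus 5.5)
Your outline is the paper's own: stratify, truncate beyond the rank-$q$ thresholds of Example~\ref{ex:ee} off $\Delta_r$, and keep a displacement invariant on $\Delta_r$. The argument does not close, though, and it fails exactly where you hedge, because the two halves need incompatible notions of equivalence.

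Suppose equivalence is taken relative to rank $q$ over a finite relational vocabulary, with $f$ read as a successor relation as in Example~\ref{ex:ee}. Then there are only finitely many rank-$q$ types of structures at all. The off-diagonal bound becomes immediate and needs none of your truncation or Feferman--Vaught machinery. But the same count also bounds $\Delta_r$, and your witnesses $C_{A_d,n}$, which are $f$-chains of length $n-d+1$, collapse by Example~\ref{ex:ee} itself.

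Suppose instead you pass to full first-order equivalence. Then finite structures are equivalent only when isomorphic, since even the number of tuples in the concept is expressible. Now the bound fails off the diagonal: on the orthogonal hyperplane $\{i_1=0\}$ of $\mathcal H_{2,n}$, the axis rays given by $P(a,f^{q}(y))$, $q=0,\dots,n$, have $n+1$ distinct sizes.

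Switching notions between the halves therefore proves neither the statement nor any single consistent version of it. You would have to fix, in advance, one notion of equivalence and one relational structure attached to a concept, and then show that this notion is bounded exactly off $\Delta_r$.

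The diagonal mechanism also does not supply the invariant you need. Every concept induced on $\Delta_r$, whether read as $C_{A,n}\subseteq\Delta_r$ or as $C_{A,n}\cap\Delta_r$, is empty, a point, or a segment $\{(i,\dots,i): d\le i\le n\}$, so its displacement is zero. Your displaced variants with offsets $(0,d,\dots)$ are disjoint from $\Delta_r$ for $d>0$. In $A_d=P(f^{d}(x),\dots,f^{d}(x))$ the parameter $d$ is a starting depth, which is precisely the kind of quantity your Step 2 truncates. Indeed, in $\mathcal H_{2,n}$ the map $(i,i)\mapsto(0,i)$ carries the concepts on $\Delta_2$ onto the concepts on $\{i_1=0\}$ (those from $P(a,f^{d}(y))$), chain for chain. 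So no invariant of the induced chains can separate the two lines.

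Conversely, genuinely displaced rays do occur on regular hyperplanes. For example, $P(f^{p}(x),f^{p}(x),f^{p+d}(x))$ lies in $\{i_1=i_2\}$, and your Feferman--Vaught decomposition cannot split it, because a single variable drives all three coordinates. The paper's Step 3 rests on the same displacement heuristic, so appealing to it will not repair this. As written, your proposal establishes the bounded half only in the trivial rank-$q$ sense and does not establish the unbounded half at all.
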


\paragraph{Shape of the argument.}
The proof has three steps. Step 1 partitions the hypercube into hyperplanes and records what each one fixes about the coordinates. Step 2 shows that when at least one coordinate direction survives unidentified, minimal reductions stabilise, giving a bound independent of $n$. Step 3 shows that on $\Delta_r$ the stabilisation fails, because the identification of all coordinates leaves an unbounded invariant behind.

\begin{proof}[Proof sketch]
\emph{Step 1: stratification.} The ambient hypercube is stratified by hyperplanes determined by coordinate equalities and by pinned coordinate values. Each hyperplane fixes some coordinate information and leaves the rest free. The binary case of Section~\ref{sec:binary} provides the base classification, in which the diagonal already appears as the unique exceptional region.

\emph{Step 2: stabilisation off the full diagonal.} Let $H\neq\Delta_r$. Then either some coordinate is pinned, or the identifying partition has at least two blocks, so at least one coordinate direction remains independent of the others. The role of minimal reduction is to remove redundant structure while preserving the elementary theory of the concept. Along an independent direction, depth information beyond the finitely many thresholds detectable by $\varphi$ is redundant and is discarded by the reduction, exactly as in Example~\ref{ex:ee}: once the free coordinate exceeds every depth threshold expressible at $\varphi$'s quantifier rank, a further application of $f$ yields a structure indistinguishable from one already produced, so it reproduces one of finitely many reduced configurations rather than a new one. The induced structures admit minimal reductions in which coordinate interactions remain separated, these reductions stabilise into a bounded family of canonical concepts, and by the correspondence of Section~\ref{sec:reduction} the induced concepts fall into a bounded number of elementary-equivalence classes. The bound depends on $r$ and $\varphi$ alone.

\emph{Step 3: failure on the full diagonal.} On $\Delta_r$ all coordinates coincide, so no independent direction remains. Applying $f$ advances every coordinate simultaneously and preserves the relative displacement pattern, which is therefore an invariant of the structure that the reduction cannot discard. Distinct displacement patterns yield non-isomorphic canonical concepts, and the number of available patterns grows with $n$. One obtains an unbounded sequence of pairwise non-equivalent reduced structures, so no bound uniform in $n$ can hold. \qed
\end{proof}

\begin{corollary}
Logical complexity in $\mathcal H_{r,n}$ is localised: every hyperplane other than $\Delta_r$ is structurally regular, and $\Delta_r$ concentrates the exceptional behaviour.
\end{corollary}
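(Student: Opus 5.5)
The statement immediately above is the Corollary, so I will derive it directly from the Hyperplane Classification theorem, which I may assume. First I will make the two informal words precise. ``Structurally regular'' for a hyperplane $H$ will mean that the number of elementary-equivalence classes of concepts induced on $H$ is bounded by a constant independent of $n$. ``Concentrates the exceptional behaviour'' will mean that $\Delta_r$ is the only hyperplane for which no such uniform bound exists. With these readings, the corollary becomes a restatement of the theorem split over the two cases $H\neq\Delta_r$ and $H=\Delta_r$.

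\emph{Regular part.} Fix $\varphi$ and $r$, and let $H\neq\Delta_r$, whether orthogonal or a diagonal hyperplane whose partition has at least two blocks. The first clause of the theorem gives at most $N(r)$ classes on $H$, with $N(r)$ depending only on $r$ and $\varphi$. Because this bound is the same constant for every such $H$ and every $n$, regularity holds uniformly over the whole family of non-full-diagonal hyperplanes, not merely hyperplane by hyperplane.

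To make ``localised'' quantitative, I will also note a global consequence. For each fixed $r$, the number of hyperplane \emph{types} is finite: there are finitely many partitions of $\{1,\dots,r\}$ and finitely many choices of which positions are pinned. The pinned values themselves range over $T_n$, so I would argue, via the reduction of Example~\ref{ex:ee}, that pinned depths beyond the $\varphi$-threshold are indistinguishable. This yields a bound on the total number of classes contributed by all regular hyperplanes together that is independent of $n$. This step is the one I expect to be the main obstacle: the theorem as stated bounds each hyperplane separately, and uniformity across the unboundedly many pinned values $p\in T_n$ needs the threshold collapse along the pinned coordinate. I would handle it by applying the same quantifier-rank cutoff at $2^{q}$ to the pinned value, where $q$ is the rank of $\varphi$. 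If that fails, I would state the corollary only in the per-hyperplane form.

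\emph{Exceptional part.} The second clause of the theorem says that the class count on $\Delta_r$ is unbounded in $n$. Combined with the regular part, this shows that $\Delta_r$ is the unique hyperplane without a uniform bound. Hence all growth in the number of elementary-equivalence classes as $n\to\infty$ is attributable to $\Delta_r$: any unbounded sequence of pairwise non-equivalent induced concepts must, from some point on, have infinitely many members induced on $\Delta_r$. This is exactly the localisation claim, and it completes the argument.
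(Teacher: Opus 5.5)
Your per-hyperplane derivation matches the paper. The paper gives the corollary no separate proof and reads it off the two clauses of the Hyperplane Classification theorem exactly as you do: each $H\neq\Delta_r$ has at most $N(r)$ classes, and $\Delta_r$ has no bound uniform in $n$, so $\Delta_r$ is the unique hyperplane without one. You should make one reading explicit. For $r\ge 3$ we have $\Delta_r\subseteq\{i_1=i_2\}$, so ``every other hyperplane is regular'' and ``$\Delta_r$ is exceptional'' only sit together coherently under the stratification reading fixed in the remark of Section~\ref{sec:ternary}, where the regular regions are $H\setminus\Delta_r$.

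The genuine gap is in your last paragraph. The claim that all growth is attributable to $\Delta_r$ (any unbounded family of pairwise non-equivalent induced concepts has infinitely many members on $\Delta_r$) requires a bound on the \emph{union} of classes over all regular hyperplanes. The theorem gives only a per-hyperplane bound. There are already $r(n+1)$ orthogonal hyperplanes $\{i_m=p\}$, so $N(r)$ classes on each still allows a total of order $r\,n\,N(r)$. In the per-hyperplane fallback you announce, your final sentence therefore does not follow.

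Nor is your proposed repair available in this framework. A pinned depth and the displacement that the paper attaches to diagonal rays to make $\Delta_r$ exceptional are the same kind of datum. Each is a fixed depth value definable from the term structure, not a free direction along which depth can be discarded. If a quantifier-rank cutoff at $2^{q}$ identified pinned depths beyond the threshold, the identical argument would identify large displacements and bound the class count on $\Delta_r$, contradicting the second clause you are assuming. If instead a definable invariant cannot be discarded, as in Step~3 of the theorem's proof, then pinned depths separate concepts on the $n+1$ hyperplanes $\{i_1=p\}$, and the union is unbounded. Drop the global strengthening and stop at ``$\Delta_r$ is the unique hyperplane with no bound uniform in $n$'', which is all the corollary asserts.
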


\paragraph{Scope.}
The theorem is stated in the thesis setting of one unary function symbol, one constant symbol, bounded term depth, and classification up to elementary equivalence. The purpose of the present paper is to isolate the geometric consequence of that analysis rather than to claim a more general result.

\paragraph{Beyond one unary function symbol.}
The restriction is not cosmetic, and it is worth stating exactly what depends on it. With a single unary $f$, the ground terms form a chain, so a ground atom is a tuple of natural numbers and the hypercube is a grid; the entire geometric reading rests on this. With $k$ unary function symbols the term space becomes a finitely branching tree, and the hypercube a product of trees: hyperplanes can still be defined by coordinate equalities, but a diagonal now identifies \emph{paths} rather than depths, and the displacement invariant of Section~\ref{sec:binary} is replaced by a richer word-valued invariant. With function symbols of arity two or more, terms are trees and the coordinate reading of a ground atom is lost altogether. Whether the localisation of complexity survives in either setting --- and in particular whether some analogue of the full diagonal remains the unique exceptional region --- is open, and we regard it as the natural next question raised by this work.

\section{The Ternary Case}
\label{sec:ternary}

The ternary hypercube is the first setting in which the distinction that drives the theorem becomes visible, because it is the first in which coordinates can be identified partially. In $\mathcal H_{2,n}$ there are only two coordinates, so identifying any two identifies all; in $\mathcal H_{3,n}$ the conditions $i=j$ and $i=j=k$ are genuinely different.

Consider the hyperplane $i=j$ in $\mathcal H_{3,n}$. It is a diagonal hyperplane of partial type: two coordinates are locked together, but $k$ is free. The induced concepts therefore retain an independent direction, Step 2 of the proof applies, and their reductions stabilise into a bounded family. Contrast the full diagonal $i=j=k$, where every coordinate is locked to every other and Step 2 has nothing to work with.

The geometry makes the relationship clear: the plane $i=j$ is itself a two-dimensional grid, with coordinates the shared value $i$ and the free value $k$, and inside it the line $i=j=k$ appears as its own main diagonal. The classification is thus recursive in a natural sense. A partial diagonal is a lower-dimensional copy of the same picture, regular except along its own diagonal.

\begin{remark}[A point the extended treatment must settle]
Because $\Delta_3\subseteq\{i=j\}$, the hyperplanes of $\mathcal H_{3,n}$ do not partition the cube: the exceptional line lies inside a hyperplane that the theorem classifies as regular. Two readings are available, and the paper should commit to one. Either hyperplanes are taken as a genuine stratification, with $\Delta_r$ removed from every hyperplane containing it, so that the regular regions are the differences $H\setminus\Delta_r$; or they are taken as an overlapping family, in which case the statement for $H=\{i=j\}$ requires an argument that the concepts induced on the plane are not distinguished by their restrictions to the line it contains. A treatment that omits the ternary case never has to face this question; the present paper does, and the stratification reading (removing $\Delta_r$ from each hyperplane containing it) is the one we adopt unless the thesis source dictates otherwise.
\end{remark}

\section{Why the Diagonal Is Exceptional}
\label{sec:diagonal}

The diagonal case differs because coordinate independence is lost. When all relevant coordinates coincide, substitutions and applications of the unary function feed back into the same coordinate pattern. This creates a recursive self-interaction that is not present elsewhere in the hypercube, and it is worth being precise about why the reduction machinery cannot absorb it.

Off the diagonal, the reduction discards depth information along a free direction. It can do so because that information is invisible to $\varphi$ beyond a bounded threshold: two structures agreeing up to the threshold agree on all sentences $\varphi$ can express, so their canonical concepts coincide. The bound on the number of classes is then a bound on the number of threshold configurations, which depends on $r$ and $\varphi$ but not on how deep the terms are allowed to go.

On the diagonal there is no free direction, and the surviving invariant is relational rather than positional. The binary case of Section~\ref{sec:binary} shows the phenomenon concretely: a diagonal ray generated by $P(f^{p}(x),f^{q}(x))$ carries the displacement $q-p$, and applying $f$ (that is, substituting $x\mapsto f(x)$) advances both coordinates together, returning the displacement unchanged --- this is what ``substitutions feed back into the same coordinate pattern'' means. Two rays of different displacement are not carried onto one another by any reduction, since the displacement is definable from the structure; and the number of realisable displacements grows with $n$. Reduction cannot discard the invariant without changing the elementary theory. The reduction process therefore need not terminate in a bounded family of elementary types, and the diagonal is the natural location of persistent complexity in the hypercube.

\section{A Monolithic Interpretation}
\label{sec:monolith}

We use the term \emph{monolithic} to emphasise that the hypercube $\mathcal H_{r,n}=T_n^{\,r}$ is not merely a flat Cartesian grid. Each coordinate belongs to the structured term space $T_n$, so each point is itself assembled from structured components, as Figure~\ref{fig:point_expansion} showed. The ambient space is a product of structured term spaces, and first-order constraints act across these layers simultaneously.

Figure~\ref{fig:monolith} shows the effect at the level of formulas rather than points. Starting from an atomic formula, substitution generates a hierarchy of specialisations, each a separate node, with arrows recording which substitution produced which. The hierarchy is the syntactic counterpart of the geometric nesting seen in Section~\ref{sec:ternary}: specialising a formula moves to a lower-dimensional region of the hypercube, and identifying two variables moves onto a diagonal.

\begin{figure}[t]
    \centering
    \includegraphics[width=0.92\textwidth]{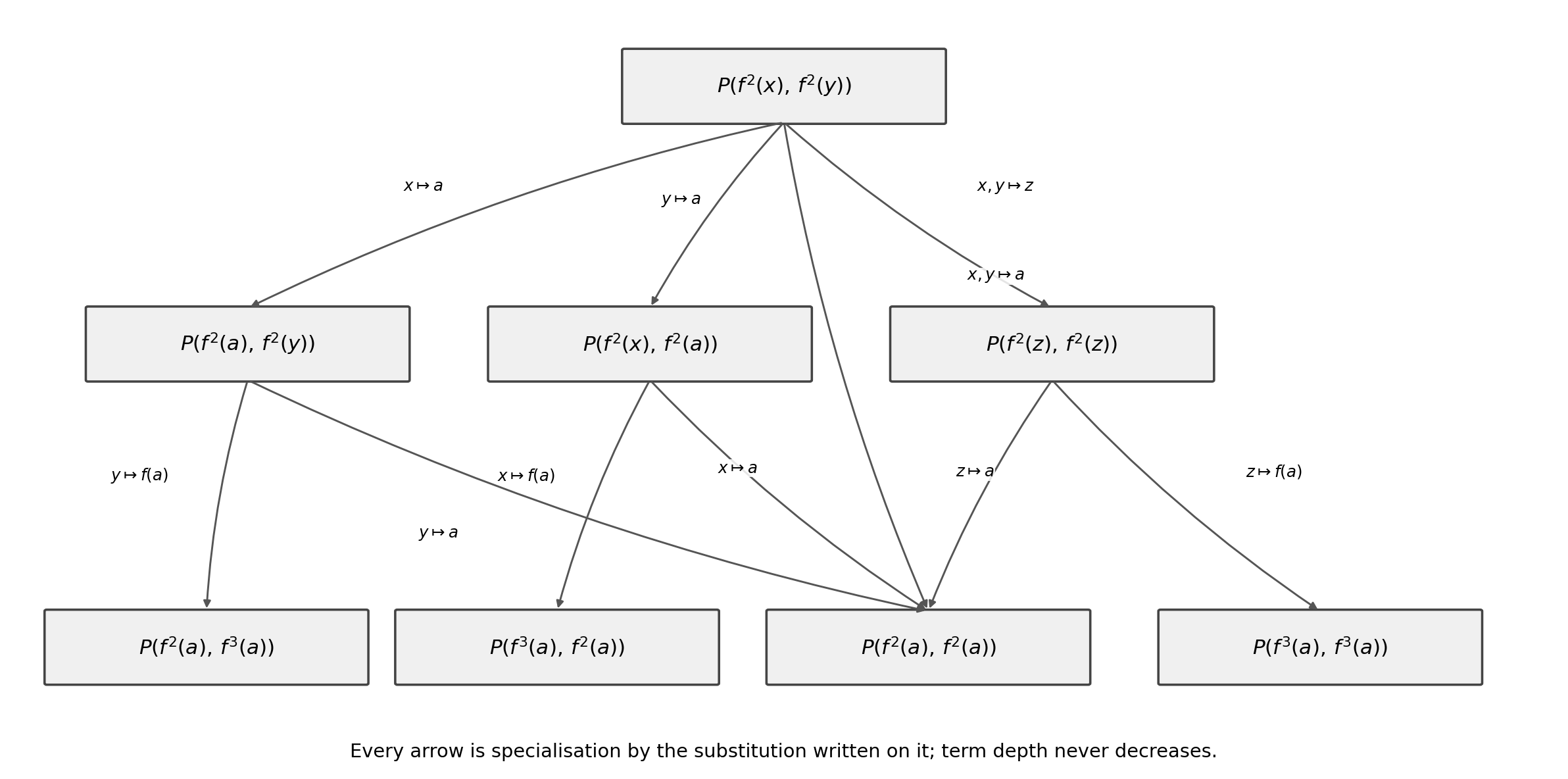}
    \caption{Monolith generated from $P(f^{2}(x),f^{2}(y))$. Every arrow is a specialisation by the substitution written on it, so along every arrow term depth is preserved or increased, as required by the definition of subsumption; identifying the two variables ($x,y\mapsto z$) moves onto a diagonal hyperplane, while instantiating a variable by a ground term moves onto an orthogonal one. The node $P(f^{2}(a),f^{2}(a))$ is reached along several routes, one from each of its subsuming formulas.}
    \label{fig:monolith}
\end{figure}

In this view the classification theorem says that first-order constraints collapse most of the ambient space into a bounded number of structural types, with the full diagonal remaining the unique source of persistent complexity \cite{Vassiliades-TsaparaThesis1997}.

\section{A Learning-Theoretic Reading}
\label{sec:learning}

\begin{proposition}[Learning-Theoretic Interpretation]
The hyperplane classification theorem suggests that, away from the full diagonal, the effective hypothesis space is controlled by a bounded number of structural classes, whereas the diagonal retains higher expressive complexity.
\end{proposition}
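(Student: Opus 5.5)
The proposition is interpretive, so I will first give it a precise reading and then derive that reading from the Hyperplane Classification theorem. For a fixed sentence $\varphi$ and a hyperplane $H$, let $\mathcal C_{H,n}(\varphi)$ be the class of concepts $C_{A,n}$ induced on $H$ that satisfy $\varphi$. Its \emph{effective hypothesis space} is the quotient of $\mathcal C_{H,n}(\varphi)$ by elementary equivalence, or, via the correspondence of Section~\ref{sec:reduction}, the set of canonical concepts realised on $H$. The justification is that any learner whose queries or hypotheses are expressed in the first-order fragment of $\varphi$ cannot separate elementarily equivalent concepts. It therefore effectively chooses among classes rather than among concepts. With this reading, the claim splits into a bounded statement off $\Delta_r$ and an unbounded statement on $\Delta_r$.

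For the bounded part, I will apply Step~2 of the theorem directly. For every $H\neq\Delta_r$, under the stratification reading adopted in Section~\ref{sec:ternary} (that is, working on $H\setminus\Delta_r$), the quotient has at most $N(r)$ elements, uniformly in $n$. I then turn this into a learning-theoretic quantity. Choose one representative per class. A halving or elimination learner over these representatives needs at most $\lceil\log_2 N(r)\rceil$ equivalence-type queries, in the spirit of Angluin's model \cite{Angluin1988}, to pin down the class. It then needs a further bounded number of membership queries to locate the representative's anchor inside the class; here I would use the binary shapes of Section~\ref{sec:binary}, where quadrants and rays are fixed by their anchor points. The first cost depends only on $r$ and $\varphi$. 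I will state the anchor-location cost separately and not claim it is independent of $n$, since anchors range over $T_n$.

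For the diagonal part, I will use Step~3. The displacement patterns give a sequence of pairwise non-equivalent concepts on $\Delta_r$ whose number grows with $n$; in the binary case these are the rays $P(f^{p}(x),f^{q}(x))$ with distinct $q-p$. Since the displacement is definable, a single sentence separates any two of these concepts. So the quotient on $\Delta_r$ has unbounded size, and every class-identification learner needs a number of queries growing with $n$, at least logarithmic in the number of displacements by a standard adversary argument. This is what ``higher expressive complexity'' should mean.

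The main obstacle is making ``effective hypothesis space'' rigorous enough for the lower bound. On the diagonal I need the displacement invariant to be genuinely visible to the learner's query language, and not merely to the reduction machinery. The first thing I would try is to restrict to the binary diagonal rays and exhibit an explicit rank-bounded sentence distinguishing each displacement, and only then lift to $\Delta_r$ by the recursive picture of Section~\ref{sec:ternary}. If that lifting fails, I would present the proposition, as its wording (``suggests'') permits, as a direct corollary about class counts, with the query bounds stated as the intended interpretation.
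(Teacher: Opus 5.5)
Your core argument is the paper's. The paper's interpretive proof sketch reads the effective hypothesis space as the hypothesis space taken up to elementary equivalence. It imports from the classification theorem, as black boxes, the bound $N(r)$ off $\Delta_r$ and the unbounded count on $\Delta_r$, and notes that the off-diagonal collapse does not degrade as $n$ grows. Your quotient of $\mathcal C_{H,n}(\varphi)$ by elementary equivalence, together with your fallback (``a direct corollary about class counts''), is exactly that argument, and it is all the proposition requires. What you add is an attempt to turn class counts into query bounds, which the paper explicitly defers to future work. As designed, that layer does not hold up, so it should not be part of the proof.

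\textbf{Problems with the query-bound layer.}

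(i) Halving over class representatives with Angluin-style equivalence queries is unsound, because counterexamples are ground atoms and membership of a ground atom is not a class invariant. Take $\varphi$ tautological and $r=2$. If the fragment contains the quantifier-free sentences $P(f^{i}(a),f^{j}(a))$, classes are singletons and no $n$-independent bound exists: the axis rays $\{(0,j):j\ge q\}$, $1\le q\le n$, on $i_1=0$ are $n$ distinct classes. If the fragment does not contain them, distinct concepts share a class, and a counterexample on which the target and its own class representative disagree eliminates the correct class. So $\lceil\log_2 N(r)\rceil$ is valid only for an ad hoc oracle answering class-level questions, where it merely restates the theorem.

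(ii) The anchor-location step is not ``bounded''. In fact, for fixed $r$ there are only polynomially many distinct concepts $C_{A,n}$, so ordinary halving identifies any target on any hyperplane with $O(\log n)$ queries. Meanwhile the $n$ possible anchors on a single axis force $\Omega(\log n)$ queries. Exact identification therefore costs $\Theta(\log n)$ on and off the diagonal alike, and only the class-count asymmetry survives, i.e.\ the paper's statement.

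\textbf{Problems with the diagonal witnesses.}

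(iii) Your witnesses do not lie on $\Delta_r$. For $q\neq p$, the ray generated by $P(f^{p}(x),f^{q}(x))$ is parallel to $\Delta_2$ and disjoint from it. Under the set-theoretic reading you adopt off the diagonal ($H\setminus\Delta_r$), the concepts induced on $\Delta_2$ are only tails $\{(i,i):i\ge m\}$, single points and $\emptyset$. That is the same repertoire as on the line $i_1=0$. The same slip occurs in the informal gloss of Section~\ref{sec:diagonal}, but the paper's proof of this proposition never uses it.

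(iv) The explicit separating sentences you plan would undercut the bound you import. With $f$ a function symbol, $\exists x\,P(x,f^{d}(x))$ has rank $1$ for every $d$. But in that language the quantifier-free $P(a,f^{q}(a))$ separates the axis rays in (i). In a relational successor signature as in Example~\ref{ex:ee}, fixed rank loses large displacements just as it loses large anchors.

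Keep the diagonal claim as a black-box appeal to Step~3, as the paper does, and present any query bounds as interpretation only.
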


\begin{proof}[Interpretive proof sketch]
By the Hyperplane Classification Theorem, regions other than $\Delta_r$ admit only a bounded number of elementary-equivalence classes. A learner that distinguishes concepts only up to elementary equivalence therefore encounters a collapsed hypothesis space in those regions, and the collapse does not degrade as the term depth grows.

The diagonal region does not admit such a bound, since its class count grows with $n$. The effective classification complexity is therefore asymmetrically distributed: bounded on regular hyperplanes and unbounded on the diagonal. \qed
\end{proof}

\begin{remark}
A classifier that respects the constraints imposed by $\varphi$ should require fewer effective degrees of freedom on regular regions than on the diagonal, where feature interactions are maximally coupled. In particular, a representative-based classifier off the diagonal would depend on the bound $N(r)$ of the classification theorem rather than on the ambient size $(n+1)^{r}$ of the hypercube; making this dependence precise, and quantifying it in sample-complexity terms, is deferred to future work. The schematic in Figure~\ref{fig:neural_bridge} summarises the chain: a structured input space, a first-order constraint, collapse of the hypothesis space off the diagonal, and a correspondingly smaller classifier.
\end{remark}

\begin{figure}[ht]
    \centering
    \includegraphics[width=0.85\textwidth]{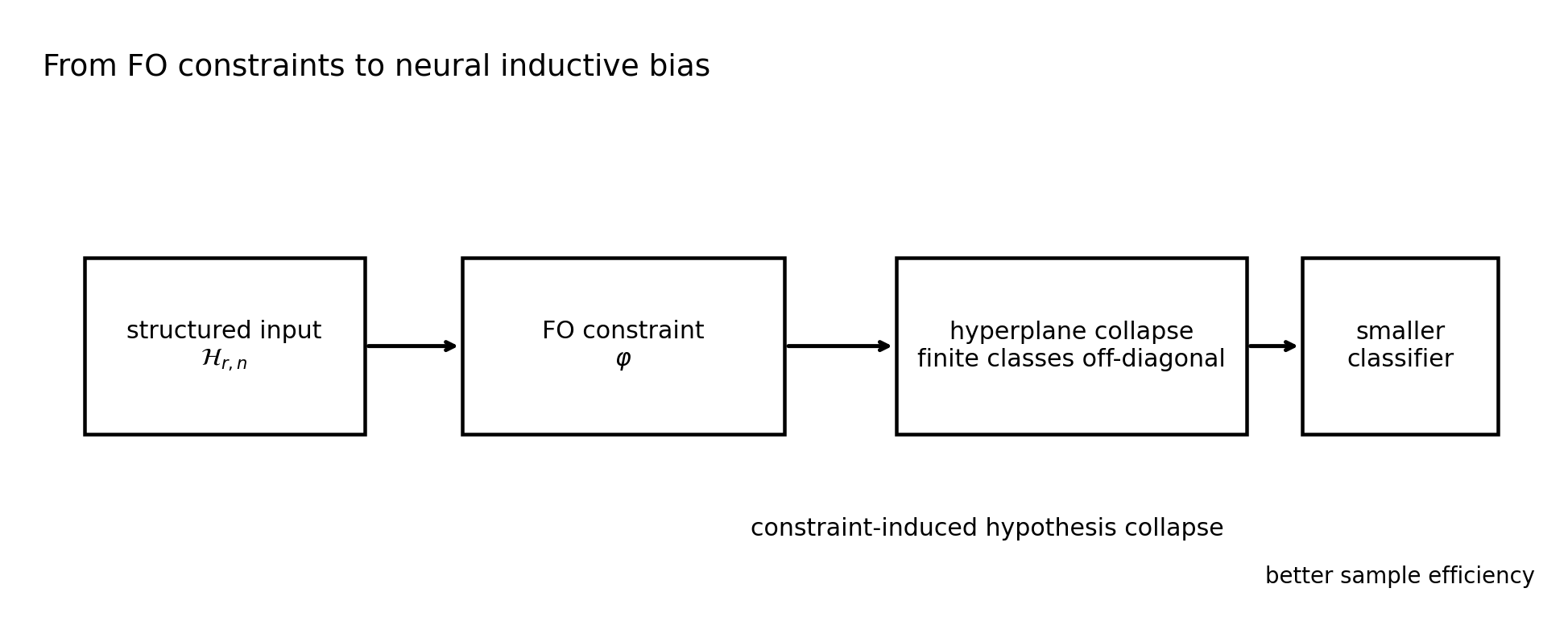}
    \caption{Constraint-induced collapse of the effective hypothesis space under first-order constraints.}
    \label{fig:neural_bridge}
\end{figure}

\section{Related Work}

Classical work on inductive generalisation established the logical background for structured concept learning. Plotkin's notes on inductive generalisation and his thesis introduced least general generalisation and the subsumption ordering on atomic formulas \cite{Plotkin1970,Plotkin1971,Plotkin1972}, and Angluin's query model supplied the learning-theoretic setting \cite{Angluin1988}. The thesis and the subsequent COLT paper belong to this line, showing that structural restrictions can make atomic formulas learnable \cite{Vassiliades-TsaparaThesis1997,TsaparaTuran1998}. Finite model theory supplies the notion of elementary equivalence and the classification techniques used here \cite{EbbinghausFlum1995}.

The present paper reinterprets that earlier theory geometrically, and the resulting picture has a counterpart in contemporary machine learning, where structural constraints are also used to shrink an effective hypothesis space rather than to regularise it uniformly. Equivariant architectures build symmetry directly into the model class \cite{CohenWelling2016}; permutation-invariant architectures do the same for set-structured inputs \cite{Zaheer2017}; higher-order graph networks calibrate expressive power against a combinatorial hierarchy \cite{MorrisEtAl2019}; and semantic loss functions inject logical constraints into training \cite{XuEtAl2018}. What the present analysis adds is a setting in which the collapse induced by a constraint can be located exactly, and in which the residual complexity is confined to an identifiable region.

\paragraph{Relation to earlier work.}
The higher-arity structural ingredients of this paper originate in the thesis, including the hypercube and hyperplane framework, minimal reductions, and the exceptional role of the diagonal. What is new here is the unified geometric presentation, the explicit complexity-localisation viewpoint, and the learning-theoretic interpretation of the resulting structural collapse \cite{Vassiliades-TsaparaThesis1997,TsaparaTuran1998}.

\section{Future Work}

Future work will connect the hyperplane classification developed here to neural classification and to explicit upper-bound learning algorithms. The main hypothesis is that first-order or symmetry constraints collapse the effective hypothesis space on regular regions, while diagonal regions remain the primary source of expressive complexity. This suggests architectures that do not treat all regions of the feature space uniformly: constrained or lower-capacity components may suffice off the diagonal, while diagonal regions may require richer representations for higher-order feature interactions.

A corresponding upper-bound algorithm would first decompose the instance space into hyperplanes, classify regular regions using representatives of their boundedly many elementary-equivalence classes, and treat the diagonal separately as the exceptional high-complexity component. Such a decomposition would make the connection between logical and neural classification more precise, by allowing a neural model to approximate the same division between regular and exceptional regions. The broader goal is to test whether bounded structural collapse off the diagonal yields measurable gains in sample efficiency, parameter reduction, and generalisation.

\section{Conclusion}

This paper revisits higher-arity atomic concept learning through the geometry of hypercubes and hyperplanes of ground instances. The main conclusion is that complexity is not uniformly distributed across the hypercube: every hyperplane other than the full diagonal collapses into boundedly many elementary-equivalence classes, with a bound independent of the term depth, while the full diagonal is exceptional and its class count grows without bound. The binary case makes the mechanism visible in four concept shapes, of which only the diagonal rays are described by a relation between coordinates rather than by independent coordinate data. The ternary case shows that the picture is recursive, with each partial diagonal a lower-dimensional copy of the same phenomenon. Off the diagonal, coordinate separation lets minimal reductions discard depth information and stabilise; on the diagonal, coordinates coincide, a displacement invariant survives every reduction, and complexity persists. The paper therefore recovers the higher-arity learnability framework in geometric form and gives it a logic-first interpretation as constraint-induced structural collapse.

\end{document}